\newtheorem{definition}{Definition}  
\newtheorem{theorem}{Theorem}
\newtheorem{corollary}{Corollary}
\title{Approximate, Adapt, Anonymize (3A): a Framework for Privacy Preserving Training Data Release for Machine Learning}
\author{
    Tamas Madl\equalcontrib,
    Weijie Xu,
    Olivia Choudhury,
    Matthew Howard\equalcontrib\\
}
\begin{document}

\maketitle

\begin{abstract}
The availability of large amounts of informative data is crucial for successful machine learning. However, in domains with sensitive information, the release of high-utility data which protects the privacy of individuals has proven challenging. Despite progress in differential privacy and generative modeling for privacy-preserving data release in the literature, only a few approaches optimize for machine learning utility: most approaches only take into account statistical metrics on the data itself and fail to explicitly preserve the loss metrics of machine learning models that are to be subsequently trained on the generated data.
In this paper, we introduce a data release framework, 3A (Approximate, Adapt, Anonymize), to maximize data utility for machine learning, while preserving differential privacy. The framework aims to 1) learn an approximation of the underlying data distribution, 2) adapt it such that loss metrics of machine learning models are preserved as closely as possible, and 3) anonymize by using a noise addition mechanism to ensure differential privacy. We also describe a specific implementation of this framework that leverages mixture models to approximate, kernel-inducing points to adapt, and Gaussian differential privacy to anonymize a dataset, in order to ensure that the resulting data is both privacy-preserving and high utility.
We present experimental evidence showing minimal discrepancy between performance metrics of models trained on real versus privatized datasets, when evaluated on held-out real data. We also compare our results with several privacy-preserving synthetic data generation models (such as differentially private generative adversarial networks), and report significant increases in classification performance metrics compared to state-of-the-art models. These favorable comparisons show that the presented framework is a promising direction of research, increasing the utility of low-risk synthetic data release for machine learning.

\end{abstract}

\section{Introduction}

Training and using machine learning models in domains with sensitive and personally identifiable information such as healthcare presents significant legal, ethical and trust challenges; slowing the progress of this technology as well as its potential positive impact. Data synthesis has been proposed as a potential mechanism that can be a legally and ethically appropriate solution to the sharing and processing of sensitive data. It is possible to synthesize a dataset that allows accurate machine learning (ML) model training without compromising privacy, depending on privacy definition and requirements. In the case of the GDPR, for example, data that do not allow singling out may be excepted from the regulation \cite{cohen2020towards}. In this work, we will rely on differential privacy \cite{dwork2006differential}, which unlike weaker privacy criteria such as k-anonymity has been shown to protect against singling out individuals \cite{cohen2020towards}.

\subsection{A privacy-preserving synthetic data generation framework maximizing utility for machine learning}

For the purposes of privacy-preserving ML by means of differentially private (DP) data release, we are interested in approaches which 1) approximate the true data distribution, 2) preserve utility for machine learning (ML models trained on the data release perform similarly to models trained on true data), and 3) preserve privacy in accordance to DP.

More formally, we propose studying a class of data generation algorithms $M$ which, given an original dataset $D=(X_i,Y_i)_{i=1}^n$ with $n$ data points $X_i$ and labels $Y_i$, produce a synthetic dataset $\tilde{D}=M(D)$, such that they
\begin{enumerate}
    \item \textbf{Approximate} the underlying data distribution: estimate a parametric density $p_\theta(x)$ by optimizing a log-likelihood objective $L_1\left(p_{\theta}, D\right):=\mathbb{E}_{x \sim D}\left[-\log p_{\theta}(x)\right]$
    \item \textbf{Adapt} the approximated data distribution such that the loss of a classifier $f$ trained on data sampled from it is close to the loss of a classifier $\tilde{f}$ on the original data, under loss $l$:
    \\
  \[
  L_2(D, \tilde{D}, f, \tilde{f}) = \left|\mathbb{E}_{(x, y) \sim D}[\ell(f(x), y)]-\mathbb{E}_{(x, y) \sim \tilde{D}}[\ell(\tilde{f}(x), y)]\right|
  \]

    The overall optimization procedure needs to trade off the importance of $L_1$ the objective encouraging faithfully preserving the data distribution, and of $L_2$, the objective encouraging matching classifier loss: $L = \alpha L_1 + (1-\alpha) L_2$. 
    \item \textbf{Anonymize} by ensuring $(\epsilon, \delta)$ differential privacy of the overall data publishing algorithm, such that the participation of a single data point is unlikely to be distinguishable. That is, ensure the data publishing algorithm is differentially private \cite{dwork2006differential}.
    
\end{enumerate}
\begin{figure*}[t]
  \includegraphics[width=\textwidth]{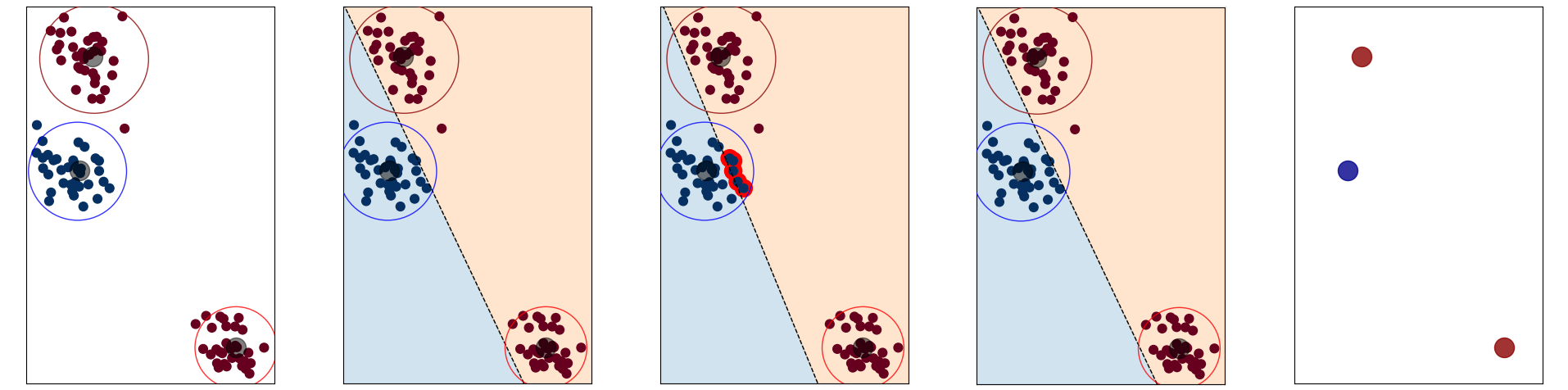}
  \caption{Illustration of the 3A (Approximate, Adapt, Anonymize) privacy-preserving data generation framework applied in a simple linearly separable setting. Panel 1 (\textit{Approximate}): density estimation (in our case, Gaussian Mixture Models). Panel 2: baseline classifier trained on full data (dashed black line). Panel 3: classifier trained on sample from the density estimator (dashed black line), which misclassifies several data points close to the blue cluster (marked with thick red borders). Panel 4 (\textit{Adapt}): adapted model which avoids these misclassifications (in our case, using Kernel Inducing Points \cite{nguyen2020dataset} and trading off density vs. classifier fidelity). Panel 5 (\textit{Anonymize}): differentially private dataset facilitating the same accuracy as the baseline classifier and minimizing identifiability risk by taking convex combinations of nearby data points (`safety in numbers') and adding noise according to Gaussian Differential Privacy \cite{dong2019gaussian}}
\end{figure*}

Many instantiations of this general framework are possible. In this paper, we evaluate ClustMix, a simple algorithm instantiating these 3 steps. We will choose 1) a Gaussian Mixture Model as the density estimator \cite{bishop2006pattern}, 2) the Kernel Inducing Point meta-learning algorithm \cite{nguyen2020dataset} as the loss approximator (with an objective allowing a tradeoff between preserving density fidelity vs. preserving classifier fidelity), and 3) an improved version of Random Mixing to ensure privacy \cite{lee2018sgd,lee2019synthesizing} - preserving combinations of data points, rather than individual data points, to facilitate a `safety in numbers' approach to avoiding reidentification. Our main contributions are the flexible privacy-preserving data generation framework described above, and the introduction of cluster-based instead of random mixing for preserving differential privacy, which, together, allow significant accuracy increases over previously published methods (see Table \ref{tab:results}).

The idea to create new training examples by taking convex combinations of existing data points has been successfully leveraged in machine learning, e.g. for data augmentation \cite{zhang2017mixup,inoue2018data}, learning with redundancy in distributed settings \cite{karakus2017straggler}, and more recently also for private machine learning \cite{lee2018sgd}. Lee et al. \cite{lee2019synthesizing} leverage random mixtures (convex combinations of a randomly sampled subset of a dataset) and additive Gaussian noise to design a differentially private (DP) data release mechanism.

However, most of these methods ignore data geometry, sampling at random instead of explicitly attempting to preserve the original data distribution. This may lead to lower downstream utility for machine learning, as low-density regions around decision boundaries may not be preserved. Mixtures of random samples may also fail to preserve certain data distributions, such as skewed and multimode continuous variables. 

In our approach, instead of random sampling, we sample from the immediate neighborhood of cluster centroids in order to preserve data distribution. Focusing on preserving cluster structure and mixing similar data points instead of random data points allows noisy mixtures to more closely approximate the original data distribution, and lose less utility than competing methods despite stronger DP guarantee.

\section{Related Work}

Privacy-preserving data publishing for machine learning tasks is an important problem, and several approaches have been proposed to address it. Mechanisms for publishing such data can operate in an interactive or non-interactive setting~\cite{zhu2017differentially}. In an interactive setting, the data publishing method receives queries from users and responds with noisy outputs to preserve data privacy. The performance of interactive data publishing methods depend on several constraints, such as type of query, maximum number of queries, accuracy, and computational efficiency. Malicious users may still be able to deduce an output much closer to the original answer, thereby exposing the sensitivity of the dataset. In a non-interactive setting, the publishing method releases the data all at once in a privacy-preserving manner. 

Among the state-of-the-art non-interactive approaches, synthetic data publishing is the most widely-used technique. This can be achieved by either anonymizing the dataset~\cite{mohammed2011differentially} or generating new samples based on the original data distribution~\cite{zhang2017privbayes,blum2013learning,kasiviswanathan2011can,hardt2012simple}. As noted in~\cite{lee2019synthesizing}, computational complexity of most of the existing methods is exponential in the dimensionality of the dataset. Hence, they are not applicable to deep learning applications that commonly deal with high-dimensional data.

An alternate method is local perturbation that perturbs every data point with additive noise~\cite{agrawal2000privacy,mishra2006privacy}. A relevant approach is random projection, which extracts lower-dimensional features via random projection and then perturbs them with some noise~\cite{kenthapadi2012privacy,xu2017dppro}. In~\cite{lee2019synthesizing}, the authors propose a new data publishing algorithm  called Differentially Private Mix (DPMix), which generates a dataset by mixing $\ell$ randomly chosen data points and then perturbing them with an additive noise.

Generative Adversarial Network (GAN)~\cite{goodfellow2014generative} is a well-known method for generating synthetic data from real data. As GANs do not provide any privacy guarantees by default, several methods have been proposed to modify it. PATE-GAN~\cite{jordon2018pate} adapts the training procedure of the discriminator to be differentially private by using a modified version of the Private Aggregation of Teacher Ensembles (PATE) framework~\cite{papernot2016semi,papernot2018scalable}. The Differentially Private GAN (DP-GAN)~\cite{xie2018differentially} aims to preserve privacy during training, by adding noise to the gradient of the Wasserstein distance. More recently, Differentially Private Mean Embeddings (DP-MERF) with Random Features have been suggested and shown to provide substantial utility improvements. DP-MERF leverages random Fourier feature representations to approximate the maximum mean discrepancy (MMD) objective in terms of two finite-dimensional mean embeddings, detaching the approximated embedding of the true data from that of the synthetic data. The former is the only term that is data dependent and requires privatization only once (and can then be re-used repeatedly to train a generator model).

In the Results section, we evaluate against these methods, and add a non-DP method for completeness’s sake - ADS-GAN (Anonymization Through Data Synthesis Using Generative Adversarial Networks) \cite{yoon2020anonymization}. In addition to training a generator model that minimizes discrepancy to the real data (in this case using Wasserstein distance), the objective function of ADS-GAN also penalizes an identifiability term directly. ADS-GAN formulates identifiability as the percentage of synthetic data points appearing in closer proximity to a real data point than the next-nearest real neighbor of that real data point (in other words, synthetic instances which may put real instances at risk of  distance-to-closest-record attacks). 

\section{Methods}

We describe a simple instantiation of the 3A framework, dubbed ClustMix. We will choose 1) a Gaussian Mixture Model as the density estimator \cite{bishop2006pattern}, 2) the Kernel Inducing Point meta-learning algorithm \cite{nguyen2020dataset} as the loss approximator (with an objective allowing a tradeoff between preserving density fidelity vs. preserving classifier fidelity), and 3) an improved version of Random Mixing to ensure privacy \cite{lee2018sgd,lee2019synthesizing}.

ClustMix is presented in Algorithm 2. The mixing component leverages the same intuition from previous research in data augmentation \cite{zhang2017mixup,inoue2018data} and vicinal risk minimization \cite{chapelle2001vicinal} that convex combinations of samples from the vicinity of existing training examples tend to follow the original data distribution. 

\subsection{Approximation through Gaussian Mixture Models}

We consider modeling the data distribution from the probabilistic perspective, approximating the probability density of the data by means of a finite mixture model \citep{mclachlan2019finite}. The joint density over the training samples factorizes as

\begin{equation}
\label{eq:gmm}
p(X, Z \mid \theta)=\prod_{n=1}^{N} \prod_{k=1}^{K}\left[p\left(z_{n}=k\right) p\left(x_{n} \mid \theta_{k}\right)\right]^{\mathbb{I}\left[z_{n}=k\right]}
\end{equation}

In  ClustMix, before fitting the Gaussian Mixture Model, we randomly sliced data into n clusters and fit the model for each randomly sliced cluster. To be specific, for feature, we uniformly sample n data points as cutoff points. We apply the following algorithm to each small region. This makes sure each sample can only contribute to the creation of one cluster and creation of one synthetic sample.  We also make the assumption of isotropic Gaussian in order to reduce computational complexity, but note that the algorithm can be made more general by relaxing this constraint.


Due to the privacy-utility tradeoff associated with the size of mixtures - too small mixtures incur privacy risk, as shown by \cite{lee2019synthesizing}. The exact minimum mixture size $l^{min}(\epsilon,\delta,\sigma_{max})$ for given privacy parameters $(\epsilon,\delta)$ can be obtained from Eq. \ref{eq:epsilon} by means of optimization  

Thus, our data synthesis Algorithm 2 creates mixtures from clusters obtained from a Gaussian Mixture Model (GMM) adapted for size-constrained clustering, with the constraint that all cluster sizes have to be greater than or equal $l^{min}(\epsilon,\delta,\sigma_{max})$. We can reformulate Eq. \ref{eq:gmm} by factorizing to explicitly control the distribution over cluster sizes, as first proposed by \cite{jitta2018controlling}: 

\begin{equation}
\label{eq:dgmm}
p(X, Z \mid \theta)=\prod_{k=1}^{K}\left[p\left(s_{k}\right) \prod_{n=1}^{N} p\left(x_{n} \mid \theta_{k}\right)^{I\left[z_{n}=k\right]}\right],
\end{equation}

Where $s_k$ refers to the number of samples in cluster $k$. The assignment of data points to clusters can be obtained by finding the maximum of the joint log likelihood. X is the features and Z is latent distributions. 


\begin{align}
\label{eq:ll}
\log p(Z \mid X, \theta) &= \log p(X \mid \theta, Z)+\log p(Z) \\
&= \sum_{n} \log p\left(x_{n} \mid \theta_{z_{n}}\right)+\sum_{k} \log p\left(s_{k}\right)
\end{align}
$p(s)$ can be chosen to take the form of a step function, such that the probability is zero for all $s < l^{min}$ that are smaller than our minimum cluster size $l^{min}$, and uniformly nonzero otherwise. The optimum assignment based on Eq. \ref{eq:ll} can then be found by expectation maximization, and yields cluster parameters and cluster assignments that approximate the data distribution given the size constraint (all clusters must contain at least $l^{min}$ data points). We skip details for reasons of space (different solution approaches can be found in \cite{jitta2018controlling} and \cite{bishop2006pattern}). 

Below, we will use $\text{GMMConstrained}(\mathcal{X}, n, l^{min})$ to denote a function taking a dataset $\mathcal{X}$ and a minimum cluster size $l^{min}$ as inputs, and producing $n$ clusters (subsets of $\mathcal{X}$) as its output. 

\subsection{Adapt through Kernel Inducing Points}

The Kernel Inducing Points (KIP) meta-learning algorithm \cite{nguyen2020dataset} allows obtaining a smaller or distorted version of an original dataset which nevertheless maintains similar model performance as a model trained on the original data, and is thus uniquely well suited for the second step of our framework. It leverages kernel ridge regression, allowing a computationally efficient convex first order optimization approach to adaptation.

Let $D=(X_t,Y_t)_{i=1}^{n}$ be the original dataset, and $\tilde{D}=(\tilde{X_s},\tilde{Y_s})_{i=1}^{s} \sim p_\theta(x)$ be a dataset obtained from the approximated model. In our case, ClustMix uses the cluster centroids as the smallest number of most representative samples (since the amount of noise that needs to be added to protect privacy rapidly grows with sample size - see next section).

Given a kernel $K$, KIP defines a kernel ridge regression loss function as follows:

\begin{equation}
\label{eq:krr}
L_{KRR}\left(X_{s}, y_{s}\right)=\frac{1}{2}\left\|y_{t}-K_{X_{t} X_{s}}\left(K_{X_{s} X_{s}}+\lambda I\right)^{-1} y_{s}\right\|_{2}^{2}
\end{equation}

where $\lambda>0$ is a fixed regularization parameter. The KIP algorithm then minimizes Eq. \ref{eq:krr} with respect to the dataset $Xs$. The optimization procedure is initialized with $\tilde{X_s}$, the synthetic dataset obtained from the approximated model through extracting centroids. 

The kernel we use here is the Neural Tangent Kernel \cite{jacot2018neural}, mirroring the exact setup described in \cite{nguyen2021dataset} ($neural_tangents$ library, Adam optimizer, z-scoring as preprocessing), with two exceptions: omitting any data augmentation, and using a fully connected architecture instead of a convolutional neural network in order to facilitate application to tabular data.  

In our optimization procedure, we explicate a tradeoff between the importance of preserving data density and classifier accuracy. In addition to allowing users to make this choice explicitly (e.g. in domains where faithfully preserving density is important), this is also important for privacy, as allowing KIP to over-optimize on the loss may result in $Xs$ regimes which are difficult to anonymize. A simple example may be a data point in $Xs$ matching an outlier in $X$, thus easily exposing the identity of that outlier to simple privacy attacks. This can be avoided with a nonzero alpha in our procedure, which forces $Xs$ staying arbitrarily close to GMM cluster centroids. Finally, a nonzero $\alpha$ acts as a regularizer and prevents overfitting to a decision boundary that may not generalize well. 

\begin{equation}
\label{eq:kip}
  \begin{array}{l}
    L\left(X_{s}, y_{s}\right)=
    \alpha . \frac{\text{tr}((X_s - \tilde{X_s})^T (X_s - \tilde{X_s}))}{\text{tr}( \tilde{X_s}^T \tilde{X_s})}
    \\
    
    +(1-\alpha)\frac{1}{2}\left\|y_{t}-K_{X_{t} X_{s}}\left(K_{X_{s} X_{s}}+\lambda I\right)^{-1} y_{s}\right\|_{2}^{2}
   \end{array}
\end{equation}

The first term is the sum of normalized, $\alpha$-weighted Euclidean distances between the data points in the original approximation set $\tilde{Xs}$ and corresponding KIP-adapted data points $Xs$. Thus, $\alpha=0$ would preserve the output of the Approximate step (in our case, GMM cluster centroids), and $\alpha=1$ would simply yield the $Xs$ most conducive to preserving classifier accuracy (ignoring density approximation). In our implementation, we obtain the optimal $\alpha$ as part of hyperparameter optimization. We apply KIP in each randomly sliced clusters.

\begin{figure}[t]
  
\scalebox{.9}{
  \includegraphics[width=0.5\textwidth]{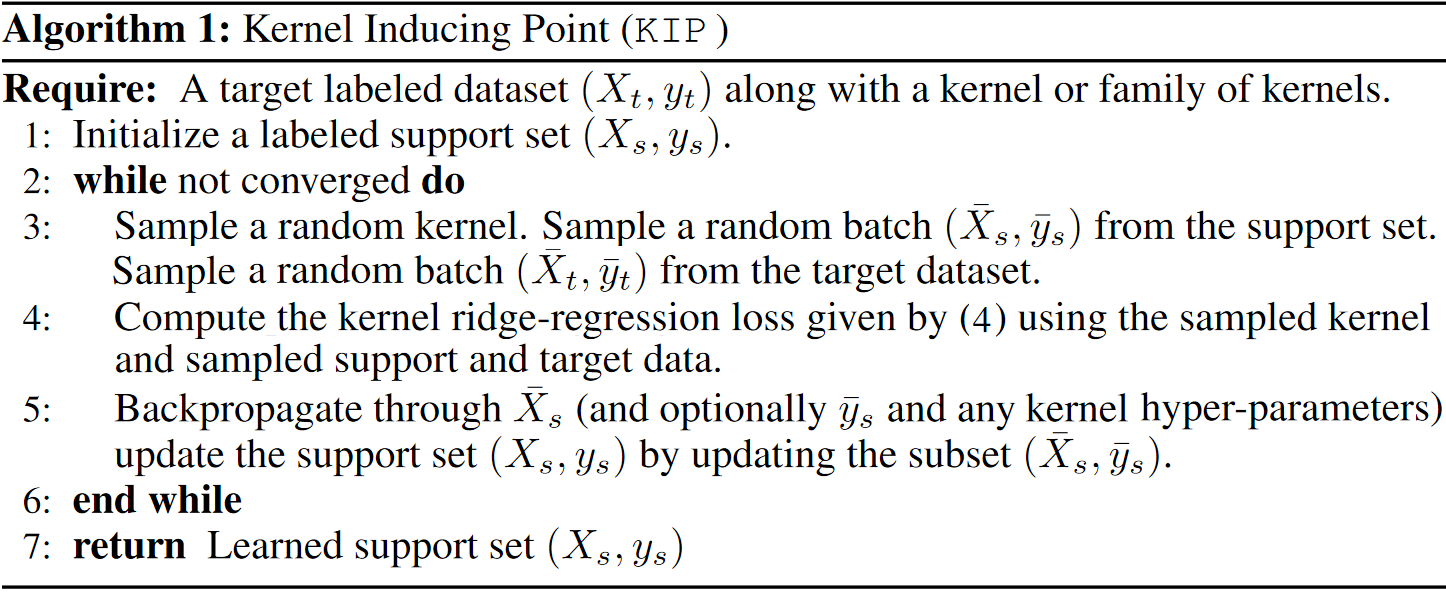}}
\end{figure}

\begin{figure*}[t]
\includegraphics[width=\textwidth]{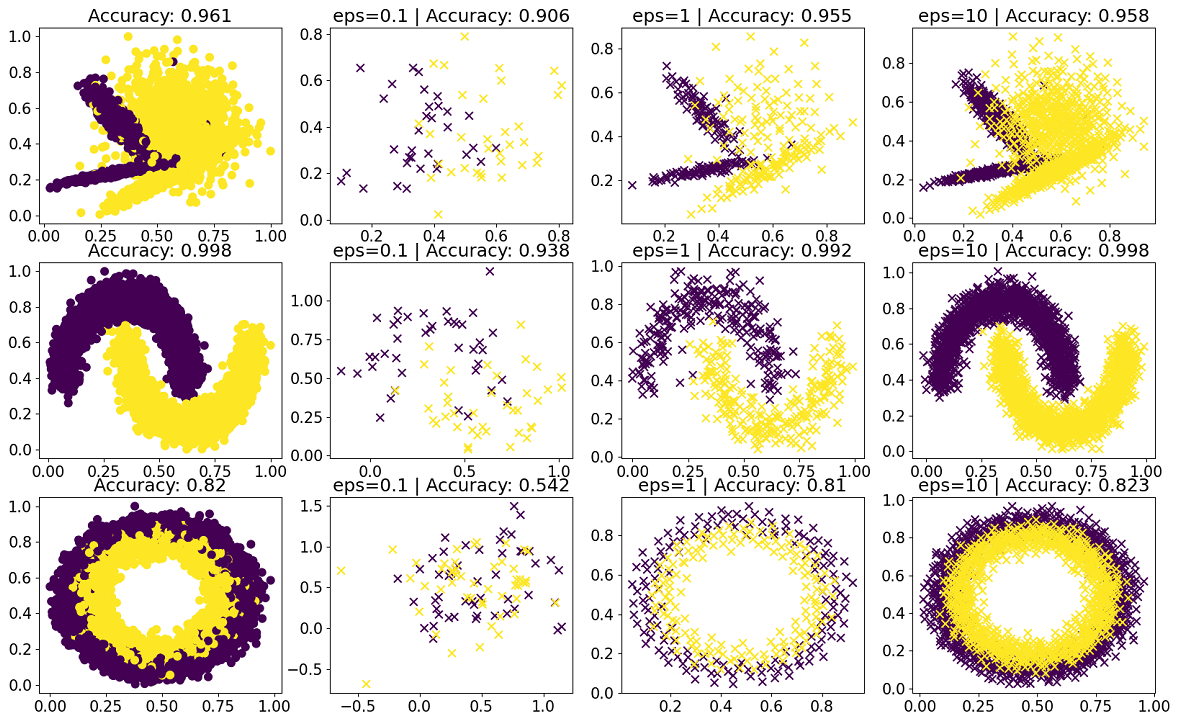}
\centering
\caption{Example synthetic data points generated with different epsilon parameters, on three toy datasets. 
}
\label{fig:toy}
\end{figure*}

\subsection{Anonymize through Gaussian Differential Privacy}


Once the data distribution is approximated and adapted, privacy-preserving synthetic data can be generated by a variety of approaches. \cite{lee2019synthesizing} showed that a simple approach of taking linear combinations of data points yields promising results, provided that a sufficient number of instances are mixed. We follow the same approach, with the difference that we mix data points of the same cluster, rather than random data points, in order to more closely preserve the underlying data distribution. Finally, we add noise as derived from Gaussian Differential Privacy as the last ingredient of the algorithm shown below.


To obtain the accuracies reported in the Results, Algorithm 2 was re-run with different parameter choices for $\sigma_{max}$. Of the resulting datasets, the one yielding the highest accuracy against the training set when training a classifier on the synthetic data was taken as the final synthetic dataset to evaluate.

\begin{figure}[ht]
  \label{alg:ClustMix}
  \scalebox{.95}{
  \includegraphics[width=0.5\textwidth]{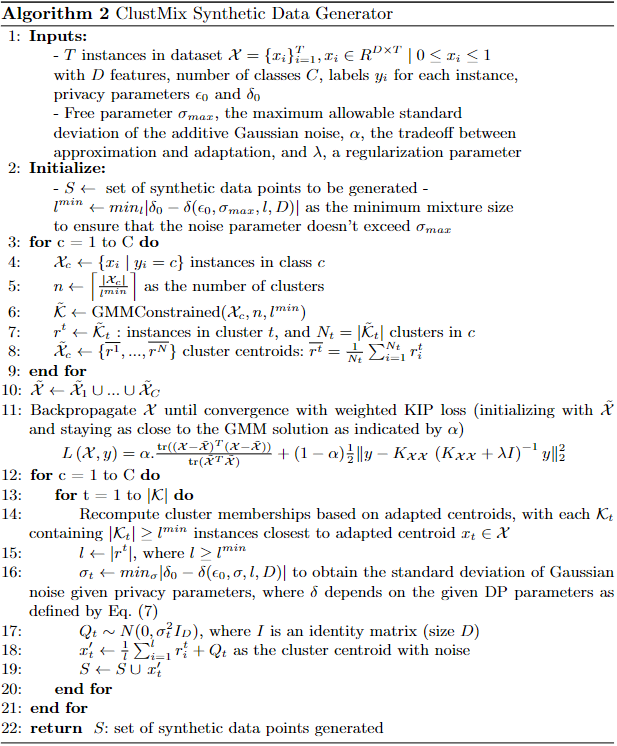}}

\end{figure}

\subsection{Differential privacy guarantees}
We first provide some preliminaries and theorems on differential privacy for our proofs. We adopt gaussian privacy since it achieved better tradeoff between privacy and utility \cite{https://doi.org/10.48550/arxiv.1911.11607}.
 \begin{definition} (Adjacent Datasets)
 let two datasets $S = {(X_{i}, Y_{i})}_{i=1}^{n}$ and $S^{'} = {(X_{i}^{'}, Y_{i}^{'})}_{i=1}^{n}$ be adjacent if they are identical except for one data point. We write $S \sim S^{'}$. \cite{dwork2006differential}
\end{definition}

 \begin{definition}  (Differential Privacy)
 A data publishing algorithm $M(S)$ is called $(\epsilon, \delta)$ differentially private if it satisfies, $$ P[M(S) \in A] \leq e^{\epsilon} P[M(S^{'}) \in A] + \delta
$$ M is an algorithm that publishes some statistics of the data. \cite{dwork2006differential}
  \end{definition}
Here you can consider $M(S)$ is an array where each entry means a feature in a generated data point. 

  \begin{definition}  (trade-off function) For any two probability distributions P and Q on the same
space, define the trade-off function $T(P, Q):[0,1] \to [0,1] $ as, 
$$T(P, Q)(\alpha) = inf\{\beta_{\phi}: \alpha_{\phi} \leq \alpha \}$$ where the infimum is taken over all measurable rejection rules.  $\alpha_{\phi} $ and $\beta_{\phi} $ represent the distributions. 
\cite{dong2019gaussian}
  
\end{definition}
   \begin{definition}  (Gaussian Differential Privacy)
 A data publishing algorithm $M(DD)$ is called $\mu-$differentially private if it satisfies, 
$$T(M(S), M(S^{'})) \geq G_{\mu}$$ where $G_{\mu}(\alpha) = \Phi(\Phi^{-1}(1 - \alpha) - \mu) $ where $\phi$ denotes standard normal CDF.
\cite{dong2019gaussian}
  \end{definition}

\begin{definition}  (Sensitivity)
Assume $\theta(S)$ is a univariate statistic of the dataset.  The sensitivity of $\theta$ is 
 $$sens(\theta) = sup_{S, S^{'}}|\theta(S) - \theta(S^{'})|  $$
 \cite{dwork2006differential}
  \end{definition}

   \begin{theorem}  
A Gaussian mechanism that operates on a statistic $\theta$ as $M(S)=\theta(S)+\xi$, where $\xi \sim \mathcal{N}\left(0, \operatorname{sens}(\theta)^{2} / \mu^{2}\right)$, is $\mu$-GDP.

  \end{theorem}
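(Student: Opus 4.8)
The plan is to show that the trade-off function between the output distributions $M(S)$ and $M(S')$ on adjacent datasets is bounded below by $G_\mu$, which by the definition of Gaussian Differential Privacy (Definition 6) is exactly the claim. First I would fix an arbitrary pair of adjacent datasets $S \sim S'$ and write $a = \theta(S)$ and $b = \theta(S')$, so that $M(S) \sim \mathcal{N}(a, \operatorname{sens}(\theta)^2/\mu^2)$ and $M(S') \sim \mathcal{N}(b, \operatorname{sens}(\theta)^2/\mu^2)$. The key quantity is the separation $|a-b|$, which by the definition of sensitivity (Definition 7) satisfies $|a - b| \le \operatorname{sens}(\theta)$.

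Next I would invoke the standard fact that the trade-off function between two univariate Gaussians with common variance $\tau^2$ and means differing by $d$ is $T(\mathcal{N}(0,\tau^2), \mathcal{N}(d,\tau^2)) = G_{|d|/\tau}$. With $\tau = \operatorname{sens}(\theta)/\mu$ and $d = |a-b|$, this gives $T(M(S), M(S')) = G_{\mu |a-b| / \operatorname{sens}(\theta)}$. I would then use the monotonicity property of the Gaussian trade-off family — namely that $\mu \mapsto G_\mu$ is decreasing in $\mu$ in the pointwise order on functions $[0,1]\to[0,1]$ — together with $|a-b|/\operatorname{sens}(\theta) \le 1$ to conclude that $G_{\mu|a-b|/\operatorname{sens}(\theta)} \ge G_\mu$ pointwise. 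Chaining these, $T(M(S), M(S')) \ge G_\mu$ for every adjacent pair, which is the definition of $\mu$-GDP.

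The main obstacle, or rather the main ingredient that must be justified or cited, is the computation of the Gaussian trade-off function $T(\mathcal{N}(0,\tau^2), \mathcal{N}(d,\tau^2)) = G_{d/\tau}$: this is the assertion that the optimal (likelihood-ratio) test for distinguishing two equal-variance Gaussians has its receiver operating characteristic described exactly by $\Phi(\Phi^{-1}(1-\alpha) - d/\tau)$. This follows from the Neyman–Pearson lemma — the optimal rejection region is a half-line $\{x : x > c\}$ — after which $\alpha_\phi = \Phi((a-c)/\tau \cdot(-1))$-type algebra and a change of variables produce the stated form; I would either reproduce this short argument or cite \cite{dong2019gaussian}, where it appears as the canonical example of a trade-off function. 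Everything else — the sensitivity bound and the monotonicity of $G_\mu$ — is immediate from the definitions already in the excerpt.

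One remark I would add for cleanliness: when $|a - b| = 0$ the two output distributions coincide, $T$ equals the identity-type function $\mathrm{Id}(\alpha) = 1-\alpha$, and since $G_0 = \mathrm{Id}$ and $G_\mu \le \mathrm{Id}$ for all $\mu \ge 0$, the bound $T \ge G_\mu$ still holds; so the degenerate case requires no separate treatment and the argument above goes through uniformly.
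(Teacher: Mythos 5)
Your argument is correct and is exactly the standard proof of this fact: the paper itself offers no proof beyond citing Theorem~2.7 of \cite{dong2019gaussian}, and your reconstruction (Neyman--Pearson computation of the equal-variance Gaussian trade-off function, the sensitivity bound $|\theta(S)-\theta(S')|\le\operatorname{sens}(\theta)$, and the pointwise monotonicity of $\mu\mapsto G_\mu$) is precisely the argument given there. The degenerate-case remark is a nice touch but, as you note, not strictly necessary.
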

  \begin{proof}
See theorem 2.7 in \cite{dong2019gaussian}
\end{proof}
    \begin{corollary} 

A mechanism is $\mu$-GDP if and only if it is $(\epsilon,\delta(\epsilon))$-DP for all $\epsilon \geq 0$. Where, $$\delta(\varepsilon)=\Phi\left(-\frac{\varepsilon}{\mu}+\frac{\mu}{2}\right)-\mathrm{e}^{\varepsilon} \Phi\left(-\frac{\varepsilon}{\mu}-\frac{\mu}{2}\right)$$
\end{corollary}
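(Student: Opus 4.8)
The plan is to derive this from the primal--dual correspondence between trade-off functions and $(\epsilon,\delta)$ guarantees established in \cite{dong2019gaussian}. Recall from that work that a mechanism $M$ is $(\epsilon,\delta)$-DP if and only if, for every pair of adjacent datasets, its trade-off function dominates the piecewise-linear benchmark $f_{\epsilon,\delta}(\alpha)=\max\{0,\,1-\delta-e^{\epsilon}\alpha,\,e^{-\epsilon}(1-\delta-\alpha)\}$ on $[0,1]$. Hence the statement ``$M$ is $(\epsilon,\delta(\epsilon))$-DP for all $\epsilon\ge 0$'' is equivalent to $T(M(S),M(S'))\ge f_{\epsilon,\delta(\epsilon)}$ holding simultaneously for all $\epsilon$, i.e. to $T(M(S),M(S'))\ge \sup_{\epsilon\ge 0} f_{\epsilon,\delta(\epsilon)}$ for all adjacent $S\sim S'$; whereas $\mu$-GDP is by definition the single inequality $T(M(S),M(S'))\ge G_{\mu}$. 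So the corollary reduces, in both directions at once, to the analytic identity $G_{\mu}=\sup_{\epsilon\ge 0} f_{\epsilon,\delta(\epsilon)}$ on $[0,1]$.

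To establish this identity I would first record the elementary properties of $G_{\mu}(\alpha)=\Phi(\Phi^{-1}(1-\alpha)-\mu)$: it is continuous and convex on $[0,1]$, decreasing, with $G_\mu(0)=1$, $G_\mu(1)=0$, and (by a one-line substitution using $1-\Phi(x)=\Phi(-x)$) it is its own inverse, $G_\mu=G_\mu^{-1}$. Convexity implies that $G_\mu$ equals the supremum of its supporting lines, and the self-inverse symmetry implies that reflecting a supporting line across the diagonal yields another supporting line, so the two non-trivial affine branches of $f_{\epsilon,\delta}$ are accounted for together; the truncation at $0$ is harmless since $G_\mu\ge 0$. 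It therefore suffices to show that, for each slope $-e^{\epsilon}$ with $\epsilon\ge 0$, the unique supporting line of $G_\mu$ of that slope has vertical intercept $1-\delta(\epsilon)$, with $\delta(\epsilon)$ as claimed.

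Computing the tangent line is routine. Writing $u=\Phi^{-1}(1-\alpha)$, differentiation gives $G_\mu'(\alpha)=-\phi(u-\mu)/\phi(u)=-e^{\,u\mu-\mu^{2}/2}$; setting $G_\mu'(\alpha)=-e^{\epsilon}$ yields the tangency point $u=\epsilon/\mu+\mu/2$, that is $\alpha^{\star}=\Phi(-\epsilon/\mu-\mu/2)$ and $G_\mu(\alpha^{\star})=\Phi(\epsilon/\mu-\mu/2)$. The supporting line at $\alpha^{\star}$ then takes the value $G_\mu(\alpha^{\star})+e^{\epsilon}\alpha^{\star}$ at $\alpha=0$, and equating this to $1-\delta(\epsilon)$ gives
\[
\delta(\epsilon)=1-\Phi\!\left(\tfrac{\epsilon}{\mu}-\tfrac{\mu}{2}\right)-e^{\epsilon}\,\Phi\!\left(-\tfrac{\epsilon}{\mu}-\tfrac{\mu}{2}\right)=\Phi\!\left(-\tfrac{\epsilon}{\mu}+\tfrac{\mu}{2}\right)-e^{\epsilon}\,\Phi\!\left(-\tfrac{\epsilon}{\mu}-\tfrac{\mu}{2}\right),
\]
exactly the stated formula. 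I would then check the two limiting regimes, $\epsilon\to\infty$ (the tangency points converge to $(1,0)$ and $\delta(\epsilon)\to 0$) and $\epsilon=0$ (tangency at the symmetric point $\alpha^{\star}=\Phi(-\mu/2)$), to confirm that these supporting lines sweep out the entire graph of $G_\mu$ without gaps.

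The main obstacle is not the calculus but making the envelope step watertight: one must verify that the specific family $\{f_{\epsilon,\delta(\epsilon)}\}_{\epsilon\ge 0}$ recovers $G_\mu$ pointwise --- every point of this convex, self-dual curve being touched by one of these $(\epsilon,\delta)$-DP benchmarks --- and justify exchanging the quantifier ``for all $\epsilon$'' with a supremum. This is precisely where convexity of $G_\mu$, its self-duality, and the boundary behaviour at $\alpha\in\{0,1\}$ are all needed; the remainder is the differentiation above together with citing the trade-off/$(\epsilon,\delta)$ duality of \cite{dong2019gaussian}.
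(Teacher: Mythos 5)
Your proposal is correct, and the calculus checks out: with $u=\Phi^{-1}(1-\alpha)$ one indeed gets $G_\mu'(\alpha)=-e^{u\mu-\mu^2/2}$, tangency point $\alpha^{\star}=\Phi(-\epsilon/\mu-\mu/2)$, value $\Phi(\epsilon/\mu-\mu/2)$, and hence exactly the stated $\delta(\epsilon)$. The paper itself offers no argument here --- its ``proof'' is a bare pointer to Corollary~2.13 of \cite{dong2019gaussian} --- so you are supplying the derivation that the paper outsources. Your route is essentially the one Dong et al.\ themselves take: their Proposition~2.12 is the primal--dual correspondence you invoke (a symmetric trade-off function $f$ is equivalent to the family of $(\epsilon,1+f^{*}(-e^{\epsilon}))$-DP guarantees), and their Corollary~2.13 is the evaluation of the conjugate $1+G_\mu^{*}(-e^{\epsilon})$; your supporting-line computation is just the geometric phrasing of that Legendre transform. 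The one step that genuinely needs care is the one you identify: the slopes $-e^{\epsilon}$, $\epsilon\ge 0$, only realize tangency on $\alpha\in(0,\Phi(-\mu/2)]$, and it is the second affine branch $e^{-\epsilon}(1-\delta-\alpha)$ of $f_{\epsilon,\delta}$, together with the self-duality $G_\mu=G_\mu^{-1}$, that covers the remaining portion $[\Phi(-\mu/2),1)$; your reflection argument handles this correctly, and the truncation at $0$ is indeed harmless since $G_\mu\ge 0$. The only thing I would tighten is to state explicitly that each branch of $f_{\epsilon,\delta(\epsilon)}$ lies below $G_\mu$ (supporting lines of a convex function, plus $G_\mu\ge 0$), which gives the $\le$ half of the envelope identity; the tangency computation gives the $\ge$ half pointwise. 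With that sentence added, the argument is watertight and, unlike the paper, self-contained modulo the duality theorem.
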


\begin{proof}
See Corollary 2.13 in \cite{dong2019gaussian}
\end{proof}
    \begin{corollary} 

The n-fold composition of $\mu_{i}$-GDP mechanisms is $\sqrt{\mu_{1}^{2} + ... + \mu_{n}^{2}}$-GDP 
\end{corollary}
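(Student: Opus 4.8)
The plan is to lift everything into the language of trade-off functions, where Gaussian privacy composes cleanly, and then to reduce the composition of two Gaussian trade-off functions to a one-dimensional Gaussian hypothesis test. Throughout, the $n$-fold composition $M=(M_1,\dots,M_n)$ denotes the (possibly adaptive) mechanism releasing $M_1(S)$ and then $M_i(S,y_1,\dots,y_{i-1})$ for $i\ge 2$, and the goal is to show $T(M(S),M(S'))\ge G_{\sqrt{\mu_1^2+\cdots+\mu_n^2}}$ for every pair of adjacent datasets $S\sim S'$.

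First I would recall the tensor product on trade-off functions: if $f=T(P,Q)$ and $g=T(P',Q')$, set $f\otimes g:=T(P\times P',\,Q\times Q')$. As established in \cite{dong2019gaussian}, this is well defined (independent of the choice of representing distributions), associative, and monotone --- $f_1\ge g_1$ and $f_2\ge g_2$ imply $f_1\otimes f_2\ge g_1\otimes g_2$. The essential structural input is the composition theorem for $f$-DP of \cite{dong2019gaussian}: if $M_1$ is $f_1$-DP and, for each value of the earlier output, $M_2(\cdot,y_1)$ is $f_2$-DP, then the composed mechanism is $(f_1\otimes f_2)$-DP; iterating, the $n$-fold composition of $f_i$-DP mechanisms is $(f_1\otimes\cdots\otimes f_n)$-DP. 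Re-deriving this theorem from scratch is the step I expect to be the main obstacle: it is not a short computation, as it requires handling the adaptivity carefully together with a data-processing-type argument for trade-off functions. I would therefore invoke it rather than reprove it.

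Granting the composition theorem, the remaining work is the identity
\[
G_{\mu_1}\otimes G_{\mu_2}=G_{\sqrt{\mu_1^2+\mu_2^2}} .
\]
Here I would first note that $G_\mu=T\!\left(\mathcal N(0,1),\mathcal N(\mu,1)\right)$: by the Neyman--Pearson lemma the optimal tests between two unit-variance Gaussians are likelihood-ratio (threshold) tests, and evaluating their type I and type II errors yields exactly $\beta=\Phi(\Phi^{-1}(1-\alpha)-\mu)$, which is the definition of $G_\mu$. Consequently
\[
G_{\mu_1}\otimes G_{\mu_2}=T\!\left(\mathcal N(0,1)\times\mathcal N(0,1),\ \mathcal N(\mu_1,1)\times\mathcal N(\mu_2,1)\right)=T\!\left(\mathcal N(0,I_2),\ \mathcal N\big((\mu_1,\mu_2),I_2\big)\right),
\]
and these two bivariate Gaussians differ only by a translation of the mean, so the projection onto the unit vector $(\mu_1,\mu_2)/\sqrt{\mu_1^2+\mu_2^2}$ is a sufficient statistic for the testing problem; since passing to a sufficient statistic leaves the trade-off function unchanged, and the projection is $\mathcal N(0,1)$ under the first law and $\mathcal N(\sqrt{\mu_1^2+\mu_2^2},1)$ under the second, the displayed identity follows. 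Applying it $n-1$ times (using associativity) gives $G_{\mu_1}\otimes\cdots\otimes G_{\mu_n}=G_{\sqrt{\mu_1^2+\cdots+\mu_n^2}}$.

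Finally I would assemble the pieces: each $M_i$ being $\mu_i$-GDP means $T(M_i(S),M_i(S'))\ge G_{\mu_i}$ for adjacent $S\sim S'$, so by the composition theorem and monotonicity of $\otimes$ the $n$-fold composition satisfies $T(M(S),M(S'))\ge G_{\mu_1}\otimes\cdots\otimes G_{\mu_n}=G_{\sqrt{\mu_1^2+\cdots+\mu_n^2}}$, i.e.\ it is $\sqrt{\mu_1^2+\cdots+\mu_n^2}$-GDP. This is Corollary 3.3 of \cite{dong2019gaussian}.
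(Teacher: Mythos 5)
Your proposal is correct: it is a faithful reconstruction of the standard argument behind Corollary 3.3 of \cite{dong2019gaussian} (composition via the tensor product of trade-off functions, plus the identity $G_{\mu_1}\otimes G_{\mu_2}=G_{\sqrt{\mu_1^2+\mu_2^2}}$ obtained by reducing the product Gaussian test to its one-dimensional sufficient projection). The paper itself gives no argument at all --- its ``proof'' is only the citation to that corollary --- so you are taking the same route, just actually spelling it out; invoking rather than reproving the $f$-DP composition theorem is a reasonable place to stop.
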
 
\begin{proof}
See Corollary 3.3 in \cite{dong2019gaussian}
\end{proof}
   \begin{theorem} 
Consider T data points consisting of a feature matrix $X := [X_1X_2...X_n] \in R^{D \times T}$ and X is normalized such that $X \in [0,1]^{D \times T}$. We obtain each synthetic data point $X'_t$ by averaging $l$ original points, and adding Gaussian noise:

\begin{equation}
\label{eq:avg}
X'_t=\frac{1}{l}\sum_{i=1}^l X_{t_i} + Q_t, 
\end{equation}

where Qt is sampled from $N(0,\sigma_t^2 I_D)$ and $l \leq D$.

Suppose each original data point can only be used one time in the generation of the synthetic dataset and the number of category equals to C, this data publishing algorithm is $(\epsilon,\delta(\epsilon,\sigma,l,C, D))$-DP for all $\epsilon \leq 0$, where $\delta(\epsilon,\sigma,l,C,D)$ is given by 
\begin{equation}
\label{eq:epsilon}
\delta(\epsilon,\sigma,l,D) = \Phi(-\frac{\epsilon l \sigma}{\sqrt{CD}}+\frac{\sqrt{CD}}{2 l \sigma})-e^{\epsilon} \Phi(-\frac{\epsilon l \sigma}{\sqrt{CD}}-\frac{\sqrt{CD}}{2 l \sigma}), 
\end{equation}

and $\Phi$ denotes the standard normal CDF of $N(0,1)$. 

   \end{theorem}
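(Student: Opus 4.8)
The plan is to reduce the statement to the Gaussian-differential-privacy toolkit already recalled above: Theorem~1 (a Gaussian mechanism whose noise scale matches the sensitivity is $\mu$-GDP), Corollary~2 (composition of GDP mechanisms), and Corollary~1 (the exact $\mu$-GDP $\Leftrightarrow$ $(\epsilon,\delta(\epsilon))$-DP conversion). Concretely, I would (i) compute the sensitivity of the averaging map in Eq.~\eqref{eq:avg}, (ii) read off the per-coordinate GDP parameter from Theorem~1, (iii) compose over the $D$ feature coordinates and the $C$ categories via Corollary~2 to obtain the overall $\mu$, and (iv) substitute that $\mu$ into Corollary~1 to recover Eq.~\eqref{eq:epsilon}.

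Step (i), the sensitivity bound, is the crux. Fix adjacent datasets $S\sim S'$ differing in a single data point, say column $X_j\in[0,1]^D$ is replaced by $X_j'\in[0,1]^D$. Because the construction uses each original point exactly once, $X_j$ enters the average defining exactly one synthetic point $X'_t=\frac1l\sum_{i=1}^l X_{t_i}+Q_t$, and every other synthetic point is literally unchanged; so the noiseless output changes only in that one block, there by $\frac1l(X_j-X_j')$. Coordinatewise this change is at most $1/l$ since entries lie in $[0,1]$, so each released scalar statistic has sensitivity $\le 1/l$, and equivalently the $\ell_2$-sensitivity of the affected synthetic point is $\le\sqrt{D}/l$. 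I would verify this is tight (take $X_j=\mathbf{0}$, $X_j'=\mathbf{1}$) so that no constant is lost, and track exactly where the hypothesis $l\le D$ is actually needed.

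With scalar sensitivity $1/l$ and noise $\xi\sim\mathcal{N}(0,\sigma^2)$ (taking $\sigma_t\equiv\sigma$), Theorem~1 makes each released coordinate $(1/(l\sigma))$-GDP. There are $D$ coordinates per synthetic point and $C$ categories whose synthetic data are generated in turn, so by Corollary~2 the overall publishing algorithm is $\mu$-GDP with $\mu=\sqrt{CD\cdot(1/(l\sigma))^2}=\sqrt{CD}/(l\sigma)$. Substituting $\mu=\sqrt{CD}/(l\sigma)$ into the identity of Corollary~1, $\delta(\epsilon)=\Phi(-\epsilon/\mu+\mu/2)-e^{\epsilon}\Phi(-\epsilon/\mu-\mu/2)$, gives $-\epsilon/\mu=-\epsilon l\sigma/\sqrt{CD}$ and $\mu/2=\sqrt{CD}/(2l\sigma)$, which is exactly Eq.~\eqref{eq:epsilon}; and since $\mu$-GDP implies $(\epsilon,\delta(\epsilon))$-DP for every $\epsilon\ge 0$, this completes the argument.

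I expect the main obstacle to be bookkeeping in the sensitivity/composition step rather than any single hard inequality: one must argue carefully that ``each original point is used once'' genuinely confines the perturbation to a single synthetic point, decide whether the $C$ categories should be combined by sequential composition (Corollary~2, which yields the stated $\sqrt{C}$ factor) or by parallel composition over disjoint per-category data (which would be tighter and would drop the $\sqrt{C}$), and ensure the passage from per-coordinate sensitivity to the $D$-dimensional isotropic Gaussian noise introduces no hidden constant. Everything downstream of the sensitivity bound is a direct invocation of the cited GDP results, so the proof should be short once the sensitivity is nailed down; the only place a referee is likely to push back is whether the $\sqrt{C}$ factor is the right (rather than merely a conservative) dependence.
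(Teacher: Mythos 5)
Your proposal follows essentially the same route as the paper's own proof: bound the per-coordinate sensitivity of the averaging map by $1/l$ (each original point enters exactly one mixture, so only one synthetic point changes), invoke Theorem~1 to get $(1/(l\sigma))$-GDP per released coordinate, compose over the $D$ features and $C$ classes via Corollary~2 to obtain $\mu=\sqrt{CD}/(l\sigma)$, and convert to $(\epsilon,\delta(\epsilon))$-DP via Corollary~1. The only substantive difference is the justification for the factor $C$: the paper attributes it to the KIP adaptation step allowing one original point to influence the synthetic samples of all $C$ classes within its region, whereas you frame it as sequential composition over categories and correctly flag that parallel composition over disjoint per-class data would drop the $\sqrt{C}$ --- which is precisely the point the paper resolves by appealing to KIP's cross-class influence rather than by a composition argument.
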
 
   \begin{proof}
If we change a single data point in the original data from $X_{1}$ to $X_{1}^{'}$, this will at most change a single synthetic data entry from $X_{t}$ to  $X_{t}^{'}$. By Definition 5 and Equation 6, we have $$sensitivity = sup|X_{t} - X_{t}^{'}| $$
$$ = sup|\frac{1}{l} (X_{1} -  \frac{1}{l} X_{1}^{'}) |$$ 
$$ \leq \frac{1}{l}$$ as each data point in $X \in [0, 1]^{D \times T}$. Thus, the sensitivity is $\frac{1}{l}$
   
Let $\mu=\frac{1}{l\sigma}$ and we have $Q_{t}$ is sampled from $N(0,\sigma_t^2 I_D)$ 
Since $$\sigma_t^2 = (\frac{l\sigma }{l})^{2} = (\frac{\frac{1}{l} }{\frac{1}{l \sigma}})^{2}$$ 
$$ = (\frac{\frac{1}{l}}{\mu})^{2}$$ Since the sensitivity is $\frac{1}{l}$ and based on theorem 1, we can show that our mechanism is  $\frac{1}{l\sigma}$-GDP.

Since changing one original data point can influence a maximum of one synthetic data point and D features, $\mu=\sqrt{1 / (l \sigma)^{2} \times D}=\frac{\sqrt{D}}{l \sigma}$. However, since we use KIP algorithm, one single point could potentially influence other classes in that region $\mu=\sqrt{1 / (l \sigma)^{2} \times D \times C}=\frac{\sqrt{CD}}{l \sigma}$. Since there are C classes,  Based on Corollary 2, our data publishing algorithm is thus $\frac{\sqrt{CD}}{l \sigma}$-GDP

Corollary 1 in \cite{dong2019gaussian} states that a mechanism is $\mu$-GDP if and only if it is $(\epsilon,\delta(\epsilon))$-DP for all $\epsilon \geq 0$. Here, $\delta(\varepsilon)=\Phi\left(-\frac{\varepsilon}{\mu}+\frac{\mu}{2}\right)-\mathrm{e}^{\varepsilon} \Phi\left(-\frac{\varepsilon}{\mu}-\frac{\mu}{2}\right)$, in order to ensure that a Gaussian output perturbation mechanism is $(\epsilon,\delta(\epsilon))$-DP, as first shown by \cite{balle2018improving}. Substituting $\mu=\frac{\sqrt{CD}}{l \sigma}$, our data publishing algorithm is thus $(\epsilon,\delta(\epsilon))$-DP, where $\delta(\epsilon)$
is given by $$\delta(\epsilon,\sigma,l,C,D) = \Phi(-\frac{\epsilon l \sigma}{\sqrt{CD}}+\frac{\sqrt{CD}}{2 l \sigma})-e^{\epsilon} \Phi(-\frac{\epsilon l \sigma}{\sqrt{CD}}-\frac{\sqrt{CD}}{2 l \sigma})$$
\end{proof}

Since each data point in the original set can be used by one randomly selected cluster to fit on GMM, and we only apply KIP algorithm to this region, it can only influence the generation of C sample. Thus, our data publishing algorithm is differentially private.

\begin{table*}[ht]
\scalebox{0.92}{
\begin{tabular}{l|ccc|cccccc}
\hline
\textit{}                                      &  num samples  & num classes & num features & Real  & DP-GAN & PATE-GAN & ADS-GAN & DP-MERF & ClustMix         \\ \hline
adult AUC                                               & 32,561 & 2 & 14  & 0.931 & 0.511  & 0.732    & 0.821   & 0.650   & \textbf{0.863} \\ \hline
census AUC                                               & 299,285 & 2 & 40 & 0.952 & 0.529  & 0.544    & 0.856   & 0.686   & \textbf{0.906} \\ \hline
credit AUC                                               & 284,807 & 2 & 29 & 0.973 & 0.435  & 0.739    & 0.911   & 0.772   & \textbf{0.969} \\ \hline
isolet AUC                                               & 4,366 & 2 & 617 & 0.988 & 0.618 & 0.529  &  \textbf{0.924}   & 0.547   & 0.903 \\ \hline
\begin{tabular}[c]{@{}l@{}}MIMIC3 EHR \\ Mortality AUC \end{tabular}                            & 58,976 & 2 & 18  & 0.896 & 0.632  & 0.473    & 0.530   & 0.728   & \textbf{0.749} \\ \hline
\begin{tabular}[c]{@{}l@{}}MIMIC3 EHR  \\ Length of stay AUC \end{tabular}                        & 57,171 & 4 & 25 & 0.728 & 0.497  & 0.501    & 0.550   & 0.508   & \textbf{0.624} \\ \hline
\begin{tabular}[c]{@{}l@{}}MNIST Accuracy \\ $\delta=10^{-5}$ \end{tabular}  & 60,000 & 10 & 784 & 0.972 & 0.403  & 0.563    & 0.710       & 0.650   & \textbf{0.818} \\ \hline \hline
\begin{tabular}[c]{@{}l@{}}Avg. score difference \;  \\ compared to real \end{tabular}                & & & & 0 & 0.402 & 0.337 & 0.162 & 0.271 & \textbf{0.087} \\ \hline
\end{tabular} }
\caption{Results of evaluating classifiers trained on synthetic data and tested on real held-out test sets. DP models employ (1,1/N)-DP with N being sample size, except for MNIST, where $\delta=10^{-5}$ (following the authors of the compared approaches). Length of Stay was discretized into 4 categories ($\leq 3$ days, $\leq 1$ week, $ \leq2$ weeks or $\geq 2$ weeks) to facilitate classification, and evaluated by one vs. one AUC}
\label{tab:results}
\end{table*}

\subsection{Data and Experimental Setup}


\textbf{Implementation Details}: we leverage an extension of Gaussian Mixture Model implementation of scikit-learn for the Approximate step (adapted to ensure clusters never contain fewer data points than the given constraint - see Section 3.1; reverting to constrained KMeans at timeout or if no solution was found), and an open-source implementation of KIP\footnote{https://github.com/google-research/google-research/tree/master/kip} for the Adapt step, modified to optimize for the weighted objective function in Eq. \eqref{eq:kip}, trading off faithfulness to the output of the Approximate step and the utility for ML by means of the $\alpha$ parameter. Noise addition is based on GDP; we obtain the standard deviation of the Gaussian to sample noise from Eq. \eqref{eq:epsilon} by means of optimization, as it does not have a closed-form solution. Specifically, we use Nelder-Mead procedure \cite{olsson1975nelder} to obtain the smallest possible noise that can be added that ensures that given DP parameters $\epsilon$ and $\delta$ are not exceeded (and discard generated data points where no such solution can be found). Obtained noise parameters were cached for computational performance.


\textbf{Data preprocessing}: data features were scaled to the interval [0,1] for ClustMix to simplify proportionate noise addition (using min-max scaling or a Normal quantile transform, whichever worked better during hyperparameter optimization). For the only dataset with continuous target variables (MIMIC-III hospital length of stay), lengths of stay were discretized into 4 buckets for classification (less than 3 days, less than a week, less than two weeks, or two weeks or more). 

\textbf{Hyperparameter tuning}: for all benchmark algorithms as well as for ClustMix, we use heteroscedastic evolutionary Bayesian optimisation (HEBO) \cite{cowen2020hebo} to optimize free parameters - over 30 iterations for all approaches - excepting DP parameters which are fixed ($\epsilon=1$ and $\delta=1/N$ with N being sample size; except for MNIST, where $\delta=10^{-5}$ to facilitate easier comparison with benchmark papers using the same approach). 

\textbf{Benchmark Datasets and Methods}: We compare with state-of-the-art DP data generation algorithms such as DP-GAN, DP-MERF, PATE-GAN. We also included non DP data generation algorithm ADS-GAN as it can lower the risk of information leakage. We compare on 7 different real datasets which we list deatils of datasets in Table 2.

\textbf{Evaluation}: qualitative results (i.e. MNIST images, and plots of toy data) can be inspected in Appendix A. Quantitative results are presented in Table \ref{tab:results} using Area Under the ROC Curve (AUC), micro-averaged (i.e. one vs. one for all pairwise combinations) in the case of multi-class classification, except for MNIST, conventionally evaluated using accuracy. For each dataset, we set aside $25\%$ of the data as a held-out test set (except MNIST, which has its own conventionally used test set, comprising $14\%$ of the total data). `Real' performance metrics are calculated by training a LightGBM model of gradient boosted decision trees \cite{ke2017lightgbm} on the training set and evaluating on the test set. For each benchmark method and for ClustMix, performance metrics are calculated by first running the method to generate a privacy-preserving synthetic training set based on the real training set, then training a LightGBM model on that synthetic training set, and evaluating it on the real test set.

\section{Results}

\begin{table}[]
\label{datad}
\centering
\begin{tabular}{llll}
\hline
       & \# samples & \# classes & \# features \\ \hline
adult  & 32,561     & 2          & 14          \\ \hline
census & 299,285    & 2          & 40          \\ \hline
credit & 284,807    & 2          & 29          \\ \hline
isolet & 4366       & 2          & 617         \\ \hline
MIMIC3 mortality & 58,976       & 2          & 18         \\ \hline
MIMIC3 length of stay & 57,171       & 4          & 25         \\ \hline
MNIST  & 60,000     & 10         & 784         \\ \hline
\end{tabular}
\caption{Dataset characteristics}
\label{tab:datasetchar}
\end{table}

We evaluate ClustMix against several states of the art differentially private data generation approaches in Table \ref{tab:results}. The combination of cluster-based instead of random mixing for preserving differential privacy (unlike similar mixing-based approaches \cite{lee2018sgd,lee2019synthesizing}), as well as adaptation for machine learning utility, result in significantly higher predictive accuracy of models trained on synthetic and tested on real data, when compared against differentially private methods. 

The mechanism can be more clearly understood from the simple 2D toy datasets illustrated in Figure \ref{fig:toy}. In an attempt to pick $\sigma_{max}$ that maximizes utility on the training set, ClustMix is forced to generate a very small number of noisy cluster centroids when $\epsilon=0.1$ (which ensures large $l$ and thus minimizes the additive noise), but yields many low-noise centroids at $\epsilon=10$ (since at this $\epsilon$, smaller clusters are sufficient to ensure that the noise levels are small enough).

Similar observations apply to the higher-dimensional MNIST digit dataset (Figure \ref{fig:mnist}), where larger clusters yield the optimal accuracy at $\epsilon=1$ (each digit is a linear combination of 118 training digits on average, with added noise), but smaller clusters are more helpful at $\epsilon=0.2$ (where each digit is obtained by averaging 20 training digits plus noise).


\section{Conclusion}

We propose a simple framework for privacy-preserving synthetic data generation that explicitly takes into account machine learning utility, in addition to faithfully modeling data distribution. We have also presented a specific instantiation of this framework, ClustMix, and have presented substantial increases in classification performance metrics compared to state-of-the-art models; implying that the presented framework constitutes a promising direction of research increasing the utility of low-risk synthetic data release for machine learning. \\

\appendix
\newpage
\section{Appendix}

\section{MNIST Example}

\begin{figure*}[t]
\includegraphics[width=\textwidth]{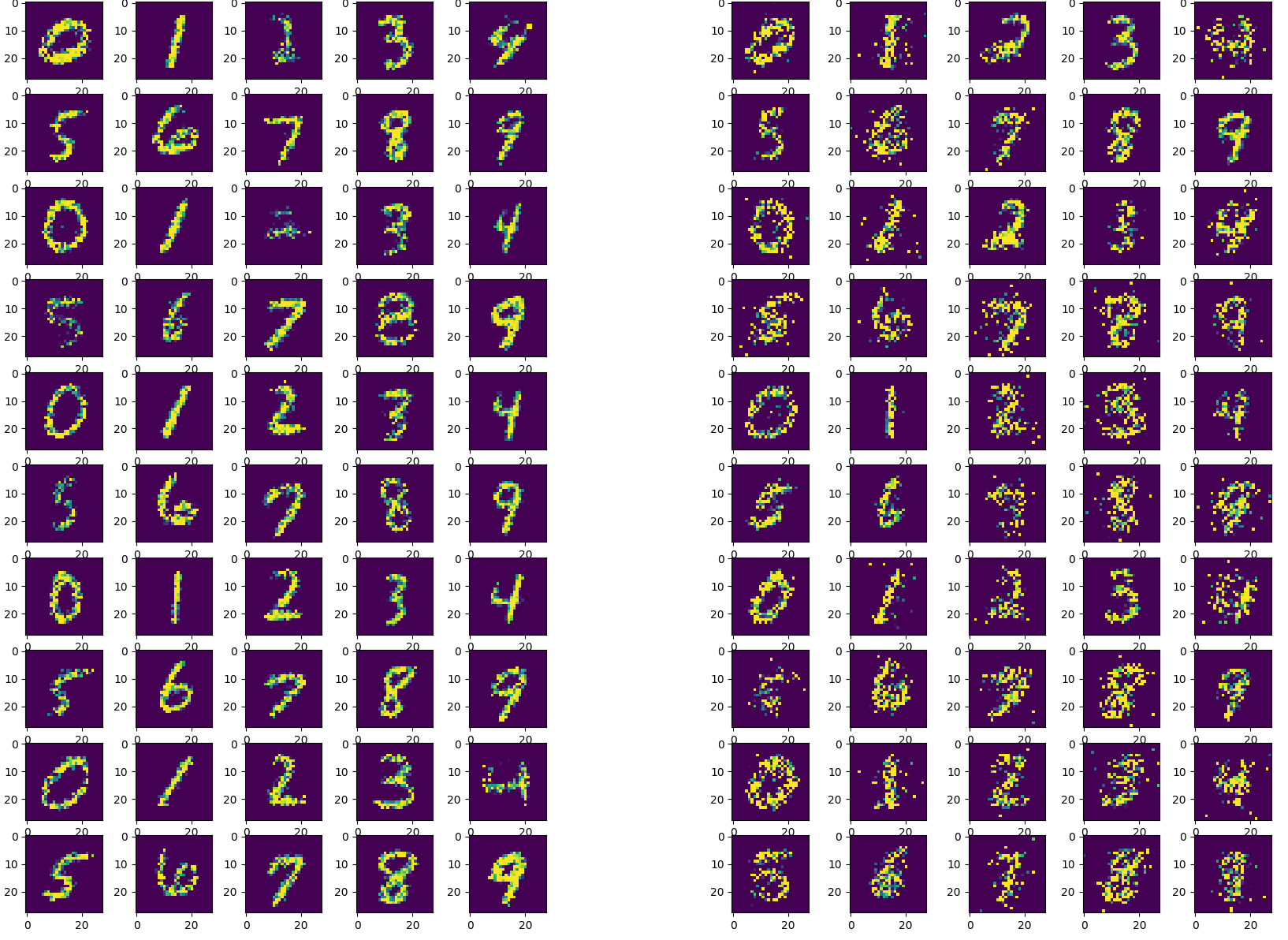}
\centering
\caption{Example synthetic data points generated with $\epsilon$ and $\delta$ parameters (1,$10^{-5}$) in the left plot and (0.2,$10^{-5}$) in the right, on the 784-dimensional MNIST dataset. Models trained on this data and tested on held-out real images perform at 0.818 (at $\epsilon=1$) and 0.650 (at $\epsilon=0.2$) accuracy respectively, compared to 0.972 accuracy on the real dataset with different cluster sizes.}
\label{fig:mnist}
\end{figure*}


\subsubsection{Acknowledgements} Many thanks to Cemre Zor for helpful comments on the Manuscript.
\newpage

\bibliography{aaai23}
\end{document}